\newtheorem{lem}{Lemma}
\newtheorem{definition}{Definition}
\newcommand{\focal}[1]{\text{FOCAL}_{#1}}
\newcommand{\open}{\text{OPEN}}
\newcommand{\confspace}[2]{\mathcal{Q}^{#1}_{#2}}
\newcommand{\compconf}[1]{q_{#1}} % Composite configuration.
\newcommand{\conf}[2]{q^{#1}_{#2}} % Single agent configuration.
\newcommand{\confstart}[1]{q^{#1}_{\text{start}}}
\newcommand{\confgoal}[1]{q^{#1}_{\text{goal}}}
\newcommand{\robot}[1]{\mathcal{R}_{#1}}
\title{\title{Unconstraining Multi-Robot Manipulation: Enabling Arbitrary Constraints in ECBS with Bounded Sub-Optimality}}
\author {
    % Authors
    Yorai Shaoul\equalcontrib,
    Rishi Veerapaneni\equalcontrib,
    Maxim Likhachev,
    Jiaoyang Li
}
\title{My Publication Title --- Single Author}
\author {
    Author Name
}
\title{My Publication Title --- Multiple Authors}
\author {
    % Authors
    First Author Name\textsuperscript{\rm 1},
    Second Author Name\textsuperscript{\rm 2},
    Third Author Name\textsuperscript{\rm 1}
}
\begin{document}

\maketitle

\begin{abstract}

Multi-Robot-Arm Motion Planning (M-RAMP) is a challenging problem featuring complex single-agent planning and multi-agent coordination. Recent advancements in extending the popular Conflict-Based Search (CBS) algorithm have made large strides in solving Multi-Agent Path Finding (MAPF) problems. However, fundamental challenges remain in applying CBS to M-RAMP. A core challenge is the existing reliance of the CBS framework on conservative ``complete'' constraints. These constraints ensure solution guarantees but often result in slow pruning of the search space -- causing repeated expensive single-agent planning calls. 
Therefore, even though it is possible to leverage domain knowledge and design incomplete M-RAMP-specific CBS constraints to more efficiently prune the search, using these constraints would render the algorithm itself incomplete. This forces practitioners to choose between efficiency and completeness.

In light of these challenges, we propose a novel algorithm, Generalized ECBS, aimed at removing the burden of choice between completeness and efficiency in MAPF algorithms. Our approach enables the use of arbitrary constraints in conflict-based algorithms while preserving completeness and bounding sub-optimality. This enables practitioners to capitalize on the benefits of arbitrary constraints and opens a new space for constraint design in MAPF that has not been explored. We provide a theoretical analysis of our algorithms, propose new ``incomplete'' constraints, and demonstrate their effectiveness through experiments in M-RAMP.
\end{abstract}

\begin{figure}[t]
  \centering\includegraphics[width=\linewidth]{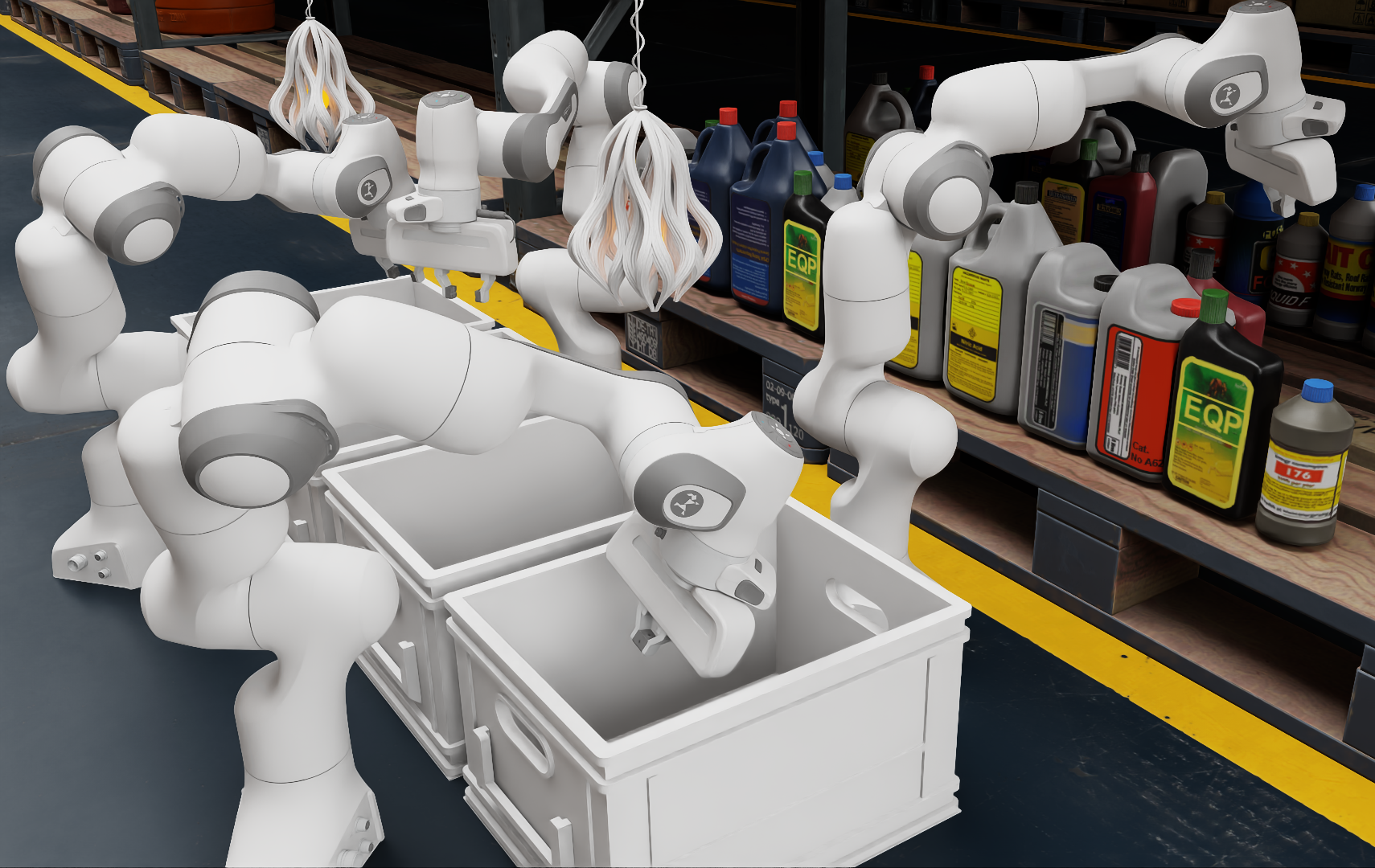}
  \caption{A team of 4 manipulators collaborating in a bin picking task. Popular MAPF algorithms such as CBS {can} be applied to multi-arm manipulation but may be ineffective due to their conservative approach to conflict resolution. Our proposed algorithm allows for more efficient planning, by capitalizing on stronger ``incomplete'' constraints,  without compromising theoretical guarantees.
  \vspace{-0.3cm}}
  \label{fig:teaser}
\end{figure}

\section{Introduction}
Teams of robots operating together in a shared workspace, can enhance the efficiency of existing systems and tackle challenges beyond the capabilities of a single robot.
A key problem in multi-agent systems is Multi-Agent Path Finding (MAPF), the problem of finding efficient collision-free paths for multiple agents on graphs.
Most existing multi-agent robotics research and MAPF applications have focused on simplistic systems like planar mobile robots in warehouses. Recently, interest in using more complex multi-agent systems to complete new tasks has increased. As exemplified in Figure \ref{fig:teaser}, multi-robot-arm motion planning (M-RAMP) is particularly appealing due to its applicability for autonomous assembly, arrangement, and construction.

M-RAMP focuses on teams of high degrees of freedom (DoF) manipulators working in a shared workspace. Unlike typical MAPF applications in warehouses where hundreds of simple robots traverse a planar floor, M-RAMP typically has a few (e.g., less than $20$) robots, each having a rich configuration space (e.g., 7 DoF). Thus, M-RAMP contains unique challenges that standard 2D MAPF applications do not face. In particular, unlike 2D MAPF which assumes fast single-agent planners, M-RAMP single-agent planners need to search more complex configuration spaces without equally informed heuristics and require non-trivial collision checking.
These differences may point to MAPF algorithms being inapplicable to M-RAMP. However, as recently shown, formulating M-RAMP as an application of MAPF to robot arms is an avenue for solving it efficiently. 

The key insight in MAPF is to avoid planning in the composite configuration space for all agents by leveraging the semi-independence of agents.
MAPF methods thus contain two main components: single-agent path finding and multi-agent collision resolution. 
From an abstract level, collisions in heuristic-search-based MAPF methods are prevented or resolved using \textit{constraints}. Constraints, e.g., requiring agents $\robot{3}$ to treat $\robot{2},\robot{1}$ as moving obstacles or to avoid specific configurations at specific times, enable iterative planning of single agents to work towards a collision-free solution. 
Constraints have significant effects on the theoretical guarantees and practical results of methods.

Conflict-Based Search (CBS)~\cite{sharon2015conflict} famously applies constraints to prevent certain agents from occupying certain configurations at certain times to resolve conflicts. 
CBS's constraints are complete (Definition \ref{def:complete_constraints}) and enable CBS-based methods to yield optimal or bounded sub-optimal solutions. However, resolving collisions with these constraints may require many iterations practically causing CBS variants to often be slow to find solutions, especially when applied to domains with non-point robots. 

Priority-based methods like Prioritized Planning (PP) \cite{erdmann1987multiple} and PIBT \cite{okumura2019pibt} prevent collisions by assigning agents priority and constraining lower-priority agents to avoid higher-priority agents. Priority constraints aggressively prune the search space and often make methods employing them fast. However, this comes at the cost of occasional failures in trivial instances requiring non-trivial coordination (e.g., two agents swapping positions in a hallway). Thus, these overly strong constraints may create methods that are ``incomplete'' in that they can fail to find solutions even when those exist. 

We arrive at a central dilemma in MAPF: either gain efficiency by employing incomplete constraints or retain completeness at the cost of more computation. In challenging domains like M-RAMP, we ideally want to use \textit{arbitrary}, likely incomplete, constraints to gain efficiency, but keep completeness.
% =========
% \citet{walker2020generalized} proposed to support arbitrary constraints with completeness by conditionally deactivating those during the search. 
% Another approach, possibly suitable to M-RAMP where planning is expensive, is to carefully select incomplete constraints to impose based on their effectiveness and keep them active.
% =======
\citet{walker2020generalized} proposed to support arbitrary constraints with completeness by adding them alongside complete constraints and then probabilistically deactivating them later on. 
% However, orthogonal to this approach, we focus on enabling a strong guarantee of bounded sub-optimality as well as efficiently using many different types of arbitrary constraints.
Orthogonal to this approach, we discriminate between different types of arbitrary constraints to allow effective combinations of these, and guarantee bounded sub-optimal solution costs.

% To this end, we design a framework that incorporates arbitrary constraints alongside complete constraints in CBS while maintaining completeness and bounding sub-optimality. We instantiate this framework with the algorithm Generalized ECBS, and propose new constraints\footnote{We highlight that our work only scratches the surface of constraint design and enables an entirely new design space for future research that can leverage different domain knowledge, or collected data, to construct new constraints.} effective in M-RAMP. Across experiments with 4, 6, and 8 robot arms, we demonstrate how M-RAMP can be better solved with Generalized ECBS compared to complete methods (e.g., ECBS) and incomplete methods (e.g., Prioritized Planning, or ECBS with only incomplete constraints).

To this end, we design a framework that incorporates arbitrary constraints in CBS and identifies effective constraint types while guaranteeing completeness and bounded sub-optimality. We instantiate this framework with the algorithm Generalized ECBS, and propose new constraints\footnote{We highlight that our work only scratches the surface of constraint design and enables an entirely new design space for future research that can leverage different domain knowledge, or collected data, to construct new constraints.} effective in M-RAMP. Across experiments with 4, 6, and 8 robot arms, we demonstrate how M-RAMP can be better solved with Generalized ECBS compared to complete methods (e.g., ECBS) and incomplete methods (e.g., Prioritized Planning, or ECBS with only incomplete constraints).

\section{M-RAMP Problem Definition}

Multi-Robot-Arm Motion Planning (M-RAMP) is the problem of finding low cost and collision-free paths for teams of manipulators (also named \textit{agents}) $\robot{1},\dots, \robot{n}$ from start configurations $\conf{i}{\text{start}}$ to goal configurations $\conf{i}{\text{goal}}$ in a shared workspace. 
Let us consider $\confspace{i}{} \subseteq \mathbb{R}^d$ as the configuration space of a single robot $\mathcal{R}_i$ with $d$ DoF and $\confspace{i}{\text{free}} \subseteq \confspace{i}{}$ the set of configurations not colliding with obstacles.
A configuration $\conf{i}{} \in \confspace{i}{}$ is defined by assigning values (joint angles) to all the DoF. We denote the occupancy of a robot $\robot{i}$ taking on the configuration $\conf{i}{}$ as $\robot{i}(\conf{i}{}) \subset \mathbb{R}^3$. Agents $\robot{i}$ and $\robot{j}$ are defined to be in \textit{conflict} when $\robot{i}{}(\conf{i}{}) \cap \robot{j}{}(\conf{j}{}) \neq \emptyset$.

A solution to the (discrete-time) M-RAMP problem is a set of single-agent paths $\Pi = \{\pi^1, \pi^2, \dots, \pi^n\}$. Each path\footnote{We use the shorthand $\conf{i}{t} = \path{i}{}[t]$ to denote the configuration that $\robot{i}$ takes at time $t$ according to $\path{i}{}$ when the context is clear. For other subscripts time is specified as needed.} $\pi^i = \{\conf{i}{0}, \conf{i}{1}, \dots, \conf{i}{T_i}\}$ obeys $\conf{i}{0}=\conf{i}{\text{start}}$, $\conf{i}{T_i}=\conf{i}{\text{goal}}$, all $\conf{i}{t} \in \confspace{i}{\text{free}}$, and there are no conflicts between any pair $\robot{i}, \robot{j}$ at all time steps or interpolated transitions when those follow $\pi^i$ and $\pi^j$.
Our objective is to find the best valid solution in terms of time and motion (radians), minimizing the sum of costs $|\Pi| = \sum_{i=1}^n |\pi^i| = \sum_{i=1}^n \sum_{t=1}^{T_i} \text{cost}(\conf{i}{t-1}, \conf{i}{t})$.

\label{sec:prob}

\section{Background}
\label{sec:background}

In practice, a common approach to solve M-RAMP is to plan in the composite state-space (i.e., considering all agents as a single high-dimensional agent) using sampling-based planners such as Rapidly exploring Random Trees (RRT) and its extensions \cite{rrt, RRT-Connect, shome2020drrt}. 
With more than one agent, solution quality sharply decreases, especially in clutter.

Another approach is to apply MAPF solvers for M-RAMP. Since naively applying algorithms like CBS (complete) or PP (incomplete) to arms may result in poor performance (Sec. \ref{sec:experiments}), more nuanced approaches have been proposed. For example, CBS-MP \cite{solis2021representation} utilizes sparse roadmaps and modifies CBS constraints to prune the search space more but causes them to become incomplete.\footnote{Although the original paper claims completeness, we and its authors concluded otherwise; see \citet{shaoul2024xcbs} for details.} 
% xECBS~\cite{shaoul2024xcbs} does not modify constraints but instead speeds up the search by reusing experience. 
In this study, we aim to broaden CBS-based methods by supporting incomplete constraints while maintaining completeness.
We first describe how manipulation planning can be solved using graph search and then describe the relevant CBS and ECBS work that our method builds on.

\subsection{Manipulation Planning as Graph Search}
One way to realize motion planning for a single agent $\robot{i}$ is by encoding its configuration space $\confspace{i}{}$ in a graph and searching it for a collision-free shortest path from $\conf{i}{\text{start}}$ to $\conf{i}{\text{goal}}$ via a sequence of edge-connected vertices. Vertices in this graph denote configurations $\conf{i}{}$, and edges are valid transitions between configurations. Since it is generally infeasible to enumerate all collision-free vertices and edges on these graphs for any reasonable $d$, it is hard to compute informative heuristics.
In this study, we employ methods to search these graphs as outlined by \citet{cohen2011adaptiveprimitives}. Specifically, we use Weighted-A* for graph search to discourage excessive exploration of the vast search space, construct edges as discretized arm motions, and determine edge and vertex validity during the search.

Coordination between multiple agents can be done by augmenting single-agent graphs with a discretized time dimension, allowing waiting, and ensuring that no conflicts occur. We call conflicts during edge traversals \textit{edge conflicts}, and \textit{vertex conflicts} otherwise.
By imposing this graph structure on the motions of individual robot arms, we can directly view M-RAMP as an application of MAPF to the robot-arm domain and employ algorithms like CBS to solve it.

\subsection{Conflict-Based Search (CBS)}
CBS is a popular complete and optimal MAPF solver that employs a low-level single-agent planner and a high-level constraint tree (CT) to resolve conflicts \cite{sharon2015conflict}. Given a set of paths, CBS detects conflicts in pairs of agents and resolves them by applying constraints.

Concretely, a CT node $N$ contains a set of paths $N.\Pi$, one for each agent, which satisfy a set of constraints $N.C$. CBS prioritizes CT nodes in an OPEN queue by their cost $N.\text{cost} = |N.\Pi|$. CBS initializes OPEN with a root CT node with no constraints and individually optimal paths. CBS proceeds iteratively, taking the best node and checking for conflicts. If there are no conflicts, it has found a valid solution and terminates. If not, it picks a conflict to resolve.
CBS resolves conflicts by applying constraints to the participating agents, one for each, and generates two successor CT nodes with agents replanned to satisfy the new constraint.

\subsection{Enhanced Conflict-Based Search (ECBS)}

ECBS \cite{barer2014suboptimal} differs from CBS in its use of a focal list \cite{pearl1982astarepsilon}, a mechanism for arbitrarily prioritizing a portion of OPEN while guaranteeing bounded sub-optimality, in the high- and low-level search.
In the high-level search, ECBS defines 
$$\focal{} := \{N \in \open \;\big\vert\; N.\text{cost} \leq w \cdot \min_{N' \in \open} lb(N')\}$$
where $lb(N):= \sum_{i=1}^{n} lb(\pi^{i})$ and $lb(\pi^i)$ is a lowerbound on the cost of path $N.\Pi[i]$. ECBS chooses the CT node $N$ with the least conflicts, $N = \arg \min_{N' \in \focal{c}}|N'.\Omega|$, with $N.\Omega$ being the conflict set identified in $N.\Pi$. This yields solutions with cost at most $w \cdot C^*$, with $C^*$ the optimal sum of costs, that are generally found much faster than CBS. 

Crucially for guaranteeing completeness in CBS and its variants, the constraints that they employ must be \textit{complete} (otherwise known as \textit{mutually disjunctive}). We adopt the following definition from \citet{li2019multi}.
\begin{definition}[Complete Constraints] A constraint is complete if, when used to resolve a conflict between $\robot{i}$ and $\robot{j}$ via imposing constraints $c_i$ and $c_j$ on the agents respectively, there do not exist any two conflict-free paths for $\robot{i}$ and $\robot{j}$ with $\robot{i}$ violating $c_i$ and $\robot{j}$ violating $c_j$.
\label{def:complete_constraints}
\end{definition}
\noindent For example, vertex constraints (Sec. \ref{sec:constraints}) are complete. Let vertex constraints $c_i$ and $c_j$ be imposed on $\robot{i}$ and $\robot{j}$ to resolve a conflict between $\robot{i}(\conf{i}{t})$ and $\robot{j}(\conf{j}{t})$ at time $t$. Any path for $\robot{i}$ that violates $c_i$ must include $\conf{i}{t}$ at time $t$. $\robot{j}$ mirrors. Therefore, these paths will always lead to a collision at time $t$, and by Def. \ref{def:complete_constraints}, vertex constraints are complete.

\begin{figure*}[t]
  \begin{minipage}{\textwidth}
      \centering
    \includegraphics[width=\textwidth]{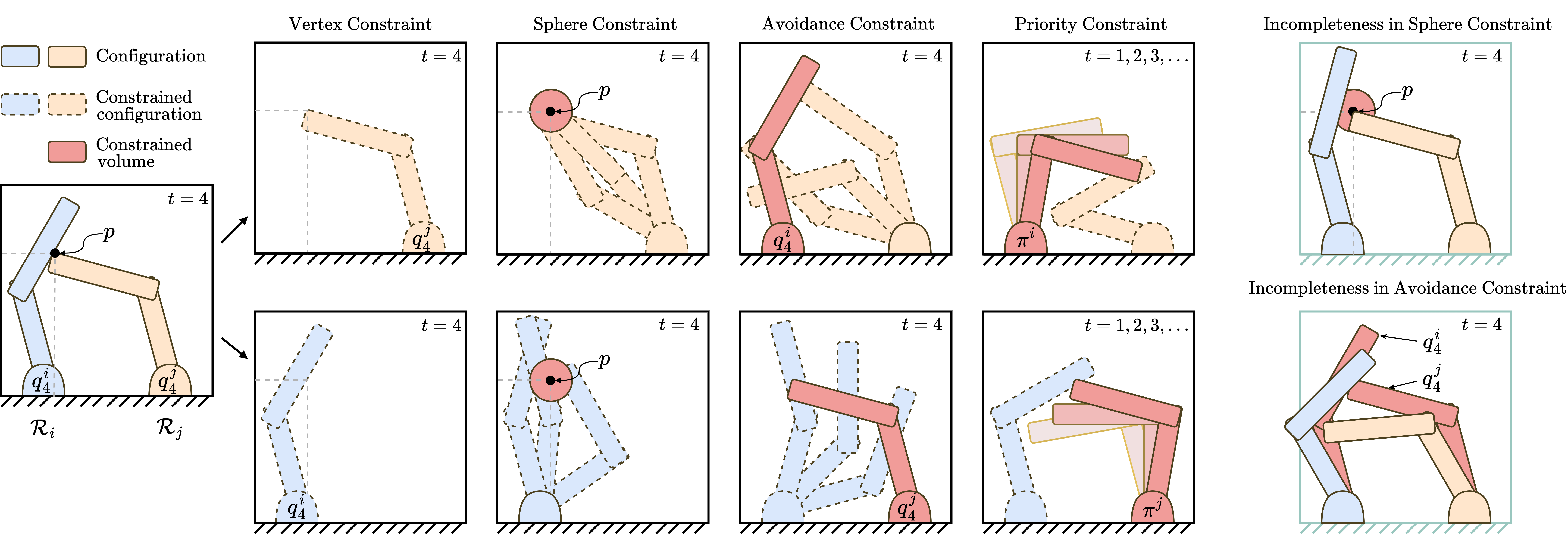}
    \caption{Given agent $\robot{i}$ in $\conf{i}{4}$ and $\robot{j}$ in $\conf{j}{4}$ conflicting at time $t=4$ and point $p$ (leftmost), we illustrate the constraint landscape when replanning for $\robot{j}$ (top row) and for $\robot{i}$ (bottom row) alongside examples of invalid configurations under the constraint (marked with dashed outlines). When applicable, we include agent configurations (e.g., $\conf{i}{4}$) or sequence of configurations (e.g., $\pi^i$) in the robot base link. From left to right: {vertex constraints} forbid an agent from taking on its conflicting configuration at $t$. {Sphere constraints} forbid collisions with a sphere centered at $p$ at time $t$. {Avoidance constraints} disallow collisions with the conflicting configuration of the other agent at $t$. {Priority constraints} force an agent to plan around the current path of the other. Rightmost: examples of incompleteness in the sphere and avoidance constraints. We illustrate valid \textit{conflict-free} configurations between $\robot{i}$ and $\robot{j}$ where each invalidates its imposed constraints. This scenario shows that sphere and avoidance constraints are not mutually disjunctive, and therefore are not complete within CBS.}
    \label{fig:constraints}
  \end{minipage}%
\end{figure*}

\section{Constraints}
\label{sec:constraints}

It is often intuitive to formulate effective constraints that accelerate the search but are incomplete. Historically, methods either accepted this incompleteness in favor of the gains in efficiency it practically provided or agreed to bargain runtime for the theoretical guarantee of completeness.
In this work, we allow for the use of arbitrary constraints in the CBS framework while guaranteeing completeness and bounding sub-optimality. 

This section describes the constraints commonly used in MAPF and introduces new constraints. In the following definitions, we consider the constraints $c_i$ and $c_j$ that are imposed to resolve a conflict $\omega$ between agent $\robot{i}$ and $\robot{j}$. The time of conflict is $t$, at which the robots took on the configurations $\conf{i}{\omega}$ and $\conf{j}{\omega}$ with the associated collision point $p \in \robot{i}(\conf{i}{\omega}) \cap \robot{j}(\conf{j}{\omega})$. For brevity, we describe the resolution of vertex conflicts and note that edge conflicts in $(t, t+1)$ follow identically. 
Additionally, we only describe $c_i$ with $c_j$ mirroring it accordingly, similar to how CBS resolves conflicts with symmetric pairs of constraints, one per agent. 

\subsection{Existing Constraints}
The most commonly used constraints in the CBS framework are \textit{vertex} and \textit{edge} constraints. These are the original domain-agnostic constraints in CBS that guarantee completeness. 

\begin{definition}[Vertex Constraint] $c_i$ forbids $\robot{i}$ from taking on the configuration $\conf{i}{\omega}$ at time $t$.
\label{def:vertex_constraint}
\end{definition}
\noindent The main drawback of these constraints is their small effect on the search space: preventing agents from occupying exactly one configuration (vertex) at a time. We would prefer to remove larger subsets of the search space when resolving a conflict, but this may come at the cost of completeness.
For example, instead of solely avoiding a single configuration, CBS-MP avoids the entire volume occupied by the other agent. We name this an ``avoidance" constraint.
\begin{definition}[Avoidance Constraint]
$c_i$ forbids $\robot{i}$ from colliding with the volume $\robot{j}(\conf{j}{\omega})$ at time $t$.
\end{definition}
\noindent Interestingly as we show in Figure \ref{fig:constraints}, this is not complete.

Arguably the most popular incomplete constraint, used by PP and PBS \cite{ma2019pbs}, is ``priority.''
\begin{definition}[Priority Constraint]
$c_i$ forbids $\robot{i}$ from colliding with $\robot{j}$ as it moves along its current path $\pi^j$.
Formally, $\forall t \in \{0,\cdots T_j\}$ $c_i$ forbids $\robot{i}$ from taking on $\conf{i}{}$ at time $t$ if $\robot{i}(\conf{i}{}) \cap \robot{j}(\conf{j}{t}) \neq \emptyset$. 
\end{definition}

\subsection{New Constraints}
In M-RAMP, a collision point $p$ is associated with a conflict. A natural choice for a constraint then is restricting agents from re-occupying $p$ at $t$. Adding a margin, we call these \textit{sphere} constraints.
\begin{definition}[Sphere Constraints] 
$c_i$ forbids $\robot{i}$ from colliding with a spherical obstacle $S^2(p,r)$ with a radius $r$ centered at the collision point $p$ at time $t$. i.e., the set $\{\conf{i}{} \in \confspace{i}{\text{free}} \;\vert\; \robot{i}(\conf{i}{}) \cap S^2(p,r) \neq \emptyset\}$ is disallowed at time $t$. 

\end{definition}

\noindent Depending on the collision checker of choice, sphere constraint satisfaction can be cheap to compute and an effective way to resolve conflicts quickly. 
With a non-zero radius, sphere constraints are incomplete (see Fig. \ref{fig:constraints}), however, as $r \rightarrow 0$ (i.e., as the constraint becomes the point $p$ itself) a sphere constraint becomes complete. We provide a quick proof.
Let $\conf{i}{}$ and $\conf{j}{}$ be any two configurations violating a symmetric ``point constraint.'' Formally, $p \in \robot{i}(\conf{i}{})$ and $p \in \robot{j}(\conf{j}{})$. Thus, $\robot{i}(\conf{i}{}) \cap \robot{j}(\conf{j}{}) \neq \emptyset$ and ``point constraints'' are complete by Definition \ref{def:complete_constraints}.

Additionally, we add flexibility to priority constraints by restricting priorities to a single time $t$.
\begin{definition}[Step-Priority Constraints]
Let $\conf{j}{t}$ be the configuration of $\robot{j}$ at time $t$ in its current path $\pi^j$.
$c_i$ forbids $\robot{i}$ from colliding with $\robot{j}(\conf{j}{t})$ at time $t$. 
\end{definition}
\noindent Step-priority constraints are different from avoidance constraints. When $\robot{i}$ \textit{avoids} $\robot{j}$, it is not allowed to collide with the original conflicting configuration of $\robot{j}$. Step-priority disallows $\robot{i}$ from colliding with the updated configuration of $\robot{j}$ according to its most recent path; therefore, this constraint adaptively changes with $\path{j}{}$.

\begin{figure*}[t]
  \begin{minipage}{\textwidth}
    \centering
      \includegraphics[width=\linewidth]{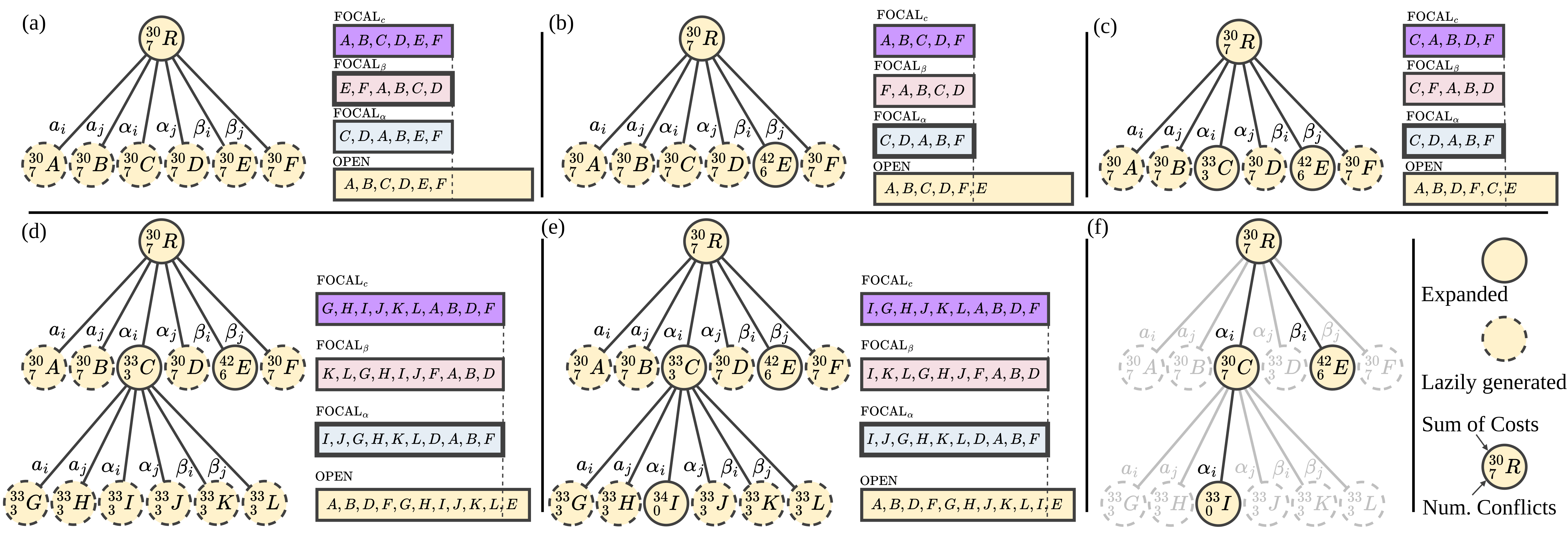}
    % Add a table.
    \caption{Illustration of the CT and priority queues in Generalized ECBS. Each CT node shows its number of conflicts (left subscript), and sum of costs (left superscript). The active priority queue sampled by DTS has a {bold} perimeter. (a) After the root node $R$ has been expanded, its children are generated lazily (dashed). (b) Node $E$ is chosen from $\text{FOCAL}_\beta$ and evaluated. Upon DTS resampling, $FOCAL_\alpha$ is activated. (c) Node $C$ is chosen and evaluated. (d) Node $C$ was chosen (a second time) and lazily expanded. (e) Node $I$ is chosen and evaluated. (f) Node $I$ is expanded and marked as a goal. We note that the one-step-lazy evaluations may allow for significantly reduced work relative to naively evaluating all child nodes upon expansion -- an operation that does not scale with the number of constraints.\vspace{-0.3cm}}
    \label{fig:gencbs_illustration}
  \end{minipage}
\end{figure*}

% A full page width placeholder figure for the algorithm illustration.

\section{Algorithmic Approach}
Generalized ECBS allows CBS variants to handle arbitrary constraints while keeping completeness and bounded sub-optimality guarantees. 
We first describe a simple and inefficient method named Arbitrary Constraint ECBS (AC-ECBS) for incorporating arbitrary constraints into ECBS. We prove that AC-ECBS is complete and bounded sub-optimal and lay the theoretical foundation for Generalized ECBS. We then introduce Generalized ECBS and explain how it overcomes the efficiency challenges plaguing AC-ECBS by lazily expanding CT nodes and prioritizing promising constraint types.

\subsection{Arbitrary Constraint ECBS with Guarantees}
\label{sec:ac_ecbs}

Given $K$ arbitrary \textit{constraint types}, i.e., $K$ different constraint options to resolve the same conflict (for example, $k=3$ when including small sphere, large sphere, and avoidance constraints), 
AC-ECBS is identical to ECBS except that it generates $B=2K + 2$ child CT nodes on each expansion. $2K$ nodes are generated using the arbitrary constraints and the additional two with complete (vertex/edge) constraints.

\begin{lem}
\label{lem:completeness_ac_ecbs}
    AC-ECBS is complete.
\end{lem}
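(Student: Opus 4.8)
The plan is to exploit the fact that, although AC-ECBS generates $B = 2K+2$ children per expansion, two of those children are always produced using complete (vertex/edge) constraints. These two children embed an ECBS-style disjunctive split inside every AC-ECBS expansion, while the remaining $2K$ arbitrary-constraint children are purely additive. I would therefore argue that this embedded complete-constraint ``spine'' suffices to inherit ECBS's completeness, and that the extra arbitrary children — even when their constraint sets are satisfied by no valid solution — can never remove a node that ECBS would have retained.

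Concretely, I would maintain the following invariant throughout the search: \emph{if a valid (conflict-free) solution exists, then $\open$ always contains at least one ``consistent'' CT node $N$}, meaning there is a conflict-free solution $\Pi^*$ satisfying every constraint in $N.C$. The base case is immediate, since the root node has $N.C = \emptyset$ and is thus consistent with any valid solution. For the inductive step, suppose a consistent node $N$ (witnessed by $\Pi^*$) is expanded. If $N.\Pi$ is conflict-free we are done; otherwise AC-ECBS selects a conflict $\omega$ between $\robot{i}$ and $\robot{j}$ and, among its children, generates the two symmetric complete constraints $c_i$ and $c_j$. Since $\Pi^*$ is conflict-free, the paths it assigns to $\robot{i}$ and $\robot{j}$ cannot simultaneously violate $c_i$ and $c_j$ — this is exactly the content of the complete-constraint property (Definition \ref{def:complete_constraints}). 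Hence $\Pi^*$ satisfies $c_i$ or satisfies $c_j$, so at least one of the two complete-constraint children remains consistent with $\Pi^*$ and is added to $\open$, preserving the invariant.

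The observation I would emphasize is that the $2K$ arbitrary children are irrelevant to this argument: they are only ever \emph{added} to $\open$, never substituted for the complete children, so their possibly inconsistent constraint sets cannot invalidate the invariant. I would then close with a termination/progress step. The embedded complete-constraint spine reproduces precisely the nodes ECBS generates along its disjunctive splits, so the frontier $\min_{N' \in \open} lb(N')$ advances exactly as in ECBS. Because a consistent node has cost bounded by $|\Pi^*|$ (at most $w \cdot C^*$) and the set of CT nodes with cost below this bound is finite, the consistent node cannot be deferred forever; when the consistent node that is eventually expanded is conflict-free, AC-ECBS returns a valid solution, so a solution is always found whenever one exists.

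I expect the main obstacle to be this final step — ruling out that the influx of arbitrary-constraint nodes, combined with FOCAL's min-conflict selection rule, perpetually starves the consistent node. The cleanest resolution is to observe that AC-ECBS inherits ECBS's termination guarantee directly, since the complete-constraint spine carries out ECBS's systematic, lower-bound-driven exploration of the constraint tree unchanged; the additional children merely enlarge $\open$ and $\focal{}$ without altering the monotone progress of the $lb$ frontier.
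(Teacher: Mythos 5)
Your proof takes essentially the same route as the paper's: both identify the two complete-constraint children generated at every expansion as an embedded ECBS search (the paper calls the resulting set of nodes the ``Complete Subtree'') whose presence in OPEN lets AC-ECBS inherit ECBS's completeness, with the $2K$ arbitrary-constraint children being purely additive. Your write-up is in fact somewhat more careful than the paper's one-paragraph argument---it makes the consistency invariant and the appeal to Definition~\ref{def:complete_constraints} explicit, and it names the starvation/termination concern that the paper leaves implicit---but the underlying idea is the same.
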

\begin{proof}
In ECBS, a solution is found in an iterative process. Starting from a root CT node $N_R$, CT nodes are branched to two child nodes by adding a single complete (vertex or edge) constraint to each and queuing them in a high-level OPEN list. In AC-ECBS, we know that for each expanded CT node, there always exist two child nodes that have been created by the addition of only complete constraints. Define a ``Complete Subtree'' $\mathcal{T}_C := \{N \in \text{OPEN} \mid  N.C \text{ only has complete constraints}\}$. Therefore, the high-level OPEN list of AC-ECBS is a superset of the OPEN of ECBS and therefore includes CT nodes guaranteed to lead to a solution by continuing to resolve conflicts in them with complete constraints. Thus, additional child nodes in CT expansions does not affect the completeness of ECBS.
\end{proof}

\begin{lem}
\label{lem:bound_ac_ecbs}
    AC-ECBS is bounded sub-optimal with a bound $w$ identical to ECBS.
\end{lem}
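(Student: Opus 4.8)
The plan is to reduce the bounded sub-optimality of AC-ECBS to that of ECBS by showing that the two ingredients underlying the ECBS guarantee survive the addition of the extra child nodes. Recall that ECBS returns a solution of cost at most $w \cdot C^*$ because (i) the returned node $N$ is drawn from $\focal{}$, so $N.\text{cost} \leq w \cdot \min_{N' \in \open} lb(N')$ by the definition of $\focal{}$, and (ii) the quantity $\min_{N' \in \open} lb(N')$ is a valid lower bound on $C^*$ at every iteration. Both facts are what I would re-establish for AC-ECBS.

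First I would observe that ingredient (i) is structurally identical in AC-ECBS: it selects the node to expand from $\focal{}$ and terminates exactly as ECBS does, upon expanding a conflict-free node. Hence whatever node $N$ is returned satisfies $N.\text{cost} \leq w \cdot \min_{N' \in \open} lb(N')$; generating the extra $2K$ children changes which nodes populate $\open$, but not this membership inequality.

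The crux is ingredient (ii), the invariant $\min_{N' \in \open} lb(N') \leq C^*$. Here I would invoke the Complete Subtree $\mathcal{T}_C \subseteq \open$ from Lemma \ref{lem:completeness_ac_ecbs}. Using the mutually disjunctive property of complete (vertex/edge) constraints together with an induction over expansions, one shows that $\open$ always contains a node $N^*$ whose constraint set $N^*.C$ is satisfied by some cost-optimal solution; for such a node $lb(N^*) \leq C^*$. This is precisely the ECBS argument, and it carries over because the two complete-constraint children are generated on every expansion, keeping $\mathcal{T}_C$ intact, so $\min_{N' \in \open} lb(N') \leq lb(N^*) \leq C^*$.

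The main obstacle I anticipate is verifying that the additional arbitrary-constraint nodes cannot corrupt this lower bound. The worry is that such nodes may carry $lb$ values below $C^*$ and thereby lower $\min_{N' \in \open} lb(N')$. I would resolve this by noting that the guarantee requires only $\min_{N' \in \open} lb(N') \leq C^*$, not equality: inserting nodes with arbitrary $lb$ values can only keep this minimum at or below the value it would take in ECBS, so the inequality is preserved no matter how restrictive the arbitrary constraints are. Combining (i) and (ii) then yields $N.\text{cost} \leq w \cdot \min_{N' \in \open} lb(N') \leq w \cdot C^*$, the same bound $w$ as ECBS.
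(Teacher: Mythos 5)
Your proposal is correct and follows essentially the same route as the paper: both arguments reduce to the fact that AC-ECBS's OPEN always contains the complete-constraint subtree from Lemma \ref{lem:completeness_ac_ecbs}, so $\min_{N' \in \open} lb(N') \leq C^*$ is inherited from ECBS, and the extra arbitrary-constraint nodes can only lower that minimum, which preserves the inequality; the FOCAL membership condition then gives $N.\text{cost} \leq w \cdot C^*$. Your explicit treatment of the worry that extra nodes might ``corrupt'' the bound is a slightly more careful articulation of what the paper handles implicitly via its superset/minimum argument, but it is the same proof.
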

\begin{proof}
As before, let the lower-bound of a CT node $N$ be $lb(N)$. 
From Lemma \ref{lem:completeness_ac_ecbs}, the high-level OPEN in AC-ECBS is always a superset of some ECBS OPEN, which we'll name $\overline {\text{OPEN}} \subseteq \open{}$. Therefore, the lower bound $\text{OPEN}_{lb} = \min_{N \in \text{OPEN}} lb(N) \leq \min_{N \in \overline{\text{OPEN}}} lb(N) =\overline{\text{OPEN}}_{lb}$. 
In ECBS, we have $\overline{\text{OPEN}}_{lb} \leq C^*$ by construction. 
Thus, $\text{OPEN}_{lb} \leq C^*$ and all nodes $N$ in the AC-ECBS FOCAL list have bounded sub-optimal costs $N.\text{cost} \leq w \cdot C^*$. \qedhere

\end{proof}

\begin{algorithm}[!t]
    \caption{Generalized ECBS High-level Planner}
    \label{alg:high-level}
    \footnotesize
    \SetKwInOut{Input}{Input}
    \SetKwInOut{Output}{Output}
    \SetKwFunction{LLPlanner}{\scriptsize LLPlanner}
    \SetKwFunction{Solve}{\scriptsize Solve}
    \SetKwFunction{GetCost}{\scriptsize GetCost}
    \SetKwFunction{InitRootNode}{\scriptsize InitRootNode}
    \SetKwFunction{InitDTS}{\scriptsize InitDTS}
    \SetKwFunction{BiasDTS}{\scriptsize BiasDTS}
    \SetKwFunction{DTS}{\scriptsize DTS}
    \SetKwFunction{Reward}{\scriptsize Reward}
    \SetKwFunction{Penalize}{\scriptsize Penalize}
    \SetKwFunction{RemoveTime}{\scriptsize RemoveTime}
    \SetKwFunction{FindConflicts}{\scriptsize FindConflicts}
    \SetKwFunction{GetSuccessors}{\scriptsize GetSuccessors}
    \SetKwFunction{GetConstraints}{\scriptsize GetConstraints}
    \SetKwFunction{ConflictsToConstraints}{\scriptsize ConflictsToConstraints}
    \SetKwFunction{CopyCTNode}{\scriptsize CopyCTNode}
    
    \Input{$n$: Number of agents \newline
        $\confstart{} = \{\confstart{0}, \dots \confstart{n} \}$ \newline
        $\confgoal{} = \{\confgoal{0}, \dots \confgoal{n} \}$ \newline}
    \Output{Path $\Pi = \{\pi^1, \cdots, \pi^n\}$ from start to goal states.}
    \SetAlgoLined\DontPrintSemicolon

    %%%%%%%%%%%%%%%%%%%%%%%%%%%%%%%%%%%%%
    % Init Root Node.
    %%%%%%%%%%%%%%%%%%%%%%%%%%%%%%%%%%%%%
    \SetKwFunction{procroot}{InitRootNode}
    \SetKwProg{myprocroot}{Procedure}{}{}
    
    %%%%%%%%%%%%%%%%%%%%%%%%%%%%%%%%%%%%%
    % Plan.
    %%%%%%%%%%%%%%%%%%%%%%%%%%%%%%%%%%%%%    
    \SetKwFunction{proc}{Plan}
    \SetKwProg{myproc}{Procedure}{}{}

    \vspace{5pt}
    \myprocroot{\procroot{}}{
    RootNode.$C$ $\leftarrow$ $\emptyset$ \tcp*{No initial constraints.}
    RootNode.$\Pi$ $\leftarrow$ invoke \LLPlanner for each agent \; 
    RootNode.cost $\leftarrow$ \GetCost(RootNode.$\Pi$) \;
    \KwRet RootNode\;  
    } 
    \vspace{5pt}

    \myproc{\proc{$n$, $\confstart{}$, $\confgoal{}$}}{ 
    RootNode $\leftarrow$ \InitRootNode( ) \;
    \InitDTS( ) \; 
    \BiasDTS() \tcp*{Optionally prioritize certain focal lists.}
    OPEN.insert(RootNode) \;
    \While{OPEN not empty}{
    
    $k \gets$ \DTS() \;
    FOCAL$_k$ $\gets \{N | N\text{.cost} \leq w \cdot \min\limits_{N' \in \text{OPEN} } N_{LB} \}$ \; \label{line:focal-alpha}
    $N \gets \;^{\; \arg \min}_{N \in \text{FOCAL}_k} f_k(N)$\; \label{line:focal-priority-alpha} \vspace{0.1cm}
    
    % Check if the chosen node has not been evaluated yet.
    \If{$N$.\textit{agentsReplan} $\neq \emptyset$}{ \label{line:lazy-replan}
    \tcp*{Lazily generated nodes are evaluated.}
        \For{$\robot{j} \in N$.\textit{agentsReplan}}{
            $N.\Pi[j] \gets$ Invoke \LLPlanner for $\robot{j}$ \;
        }
        $N$.cost  $\gets$ \GetCost($N.\Pi$) \; \label{line:lazy-cost-update}
        $N$.\textit{agentsReplan} $\gets \emptyset$ \;
        
        % UPDATE DTS.
        $\hat {\Omega} \gets N.\Omega$\tcp*{{Previously found conflicts.}}
        $N$.$\Omega \leftarrow$ \FindConflicts($N$.$\Pi$) \label{line:conflicts_update}\;  
        \If{$|N$.$\Omega| < |\hat {\Omega}|$}{
            \DTS.\Reward($k$) \label{line:dts_reward} \tcp*{{Increase prob. for sampling $k$.}}
        }
        \Else{
            \DTS.\Penalize($k$)\label{line:dts_penalize} \tcp*{{Reduce prob. for sampling $k$.}}
        }
        \textbf{continue}
    }

    OPEN.pop($N$)\;
    \If{$N$.$\Omega = \emptyset$}{
        \KwRet $N$.$\Pi$\;
    }
    $C \leftarrow$ \GetConstraints ($\Omega$.first) \tcp*{Constraints for the conflict, complete and incomplete.}
    
    \For{$c \in C$}{
        $N' \gets$ \CopyCTNode($N$)\;
        $N'$.$C$ $\leftarrow$ $N.C$ $\cup$ \{c\}\; \label{line:add-constraint}
        $N'$.\textit{agentsReplan} $\gets c$.agent\_id \; \label{line:delay-replanning}
        OPEN.insert($N'$) \tcp*{One step lazy generation.}
    }
    }
    \KwRet $\emptyset$    
    }
\end{algorithm}

\subsection{Generalized ECBS}
\label{sec::gen_ecbs}
Although AC-ECBS can handle arbitrary constraints, it suffers from needing to generate all $B$ successors at every CT expansion.
Furthermore, certain constraints may be more appropriate for certain planning problems (e.g., smaller constraints are better than larger ones in tight coordination), and we would like the algorithm to adapt to such a need.
We tackle both of these problems with Generalized ECBS. Our key idea is to use lazy expansions to ease computational work and multiple focal queues to adaptively prioritize constraint types. Algorithm \ref{alg:high-level} describes Generalized ECBS.

\subsubsection{Lazy CT Expansion}

Imagine we are given a node with some conflicts that need to be resolved. In AC-ECBS, we apply the $B$ constraints and query 
that many low-level planning calls 
before inserting these $B$ successor nodes into OPEN and FOCAL. 
Drawing inspiration from work on lazy planning, where nodes are generated during the search and checks for their validity are deferred \cite{mukherjee2022epase, haghtalab2018lazysp}, we instead generate all $B$ nodes, each with an extended constraint set (Line \ref{line:add-constraint}), but delay the low-level planning calls (Line \ref{line:delay-replanning}). Unlike common lazy approaches that assume fast node generation, our successor generation requires replanning and is expensive. Thus, we reuse parent values (e.g., the sum of costs and number of conflicts) in child nodes and insert them into the queue using the priority derived from these approximate values. Now, when choosing a node for expansion, if it is an approximate node we only then call the low-level planner to get the updated paths and re-queue it (Line \ref{line:lazy-replan}).

When adding lazy evaluations to AC-ECBS directly, successor nodes are indistinguishable in terms of their cost and conflict count values (see Fig. \ref{fig:gencbs_illustration}(a)), causing child nodes to be prioritized in an uninformed manner. We can solve this issue using multiple focal queues.

\subsubsection{Multiple Focal Queues}
The main problem with one focal queue is that it can only be sorted with one priority function, making it hard to distinguish between lazy successor nodes with the same approximate values (i.e., a copy of their parent values). Additionally, designers creating these constraints may have priors on certain constraints being more useful than others. Inspired by \citet{dtsMultiHeuristic}, we can tackle both of these issues by generalizing CBS to have multiple FOCAL queues (Line \ref{line:focal-alpha}), each sorted by their priority function. Each priority function can distinguish between identical lazy successor nodes and incorporate external domain knowledge by prioritizing specific constraint types.

Concretely, given $K$ arbitrary constraint types (excluding the complete edge/vertex constraints), we initialize $K+1$ FOCAL queues. Each $FOCAL_k$ uses $f_k(N)$ to sort CT nodes $N$ lexicographically by $(|N.\Omega|, |N.\Pi|, \tilde \rho_k(N))$ (Line \ref{line:focal-priority-alpha}). That is, primarily according to the number of conflicts in the paths of $N$, breaking ties according to the sum of costs and (one minus) the density of constraints of type $\alpha$ in $N$.
\begin{equation*}
    \tilde \rho_k(N) := 1 - \dfrac{|\{c \in N.C \; | \; c.\text{ type } = k \}|}{|N.C|} 
\end{equation*}
We note that arbitrary priority functions are possible and encourage incorporating domain knowledge here too. We found that this worked well in practice.

\subsubsection{Choosing Between Queues}
Given multiple FOCAL queues and an OPEN queue, we need to choose which queue to pick to decide our next CT node to expand. A simple approach used before in single-agent works is to evenly rotate between all the queues \cite{aine2016mhastar}. 

However, some constraints may prove to be more effective in different parts of the search, and we would like the algorithm to dynamically prioritize those. We can use Dynamic Thompson Sampling (DTS) as described in \citet{dtsMultiHeuristic} to define a selection mechanism among all the queues that get updated during the search. As seen in Algorithm \ref{alg:high-level},
each time an \textit{un-evaluated} CT node $N$ is considered from $\text{FOCAL}_k$, it is evaluated and its true conflict set $\Omega$ is computed (line \ref{line:conflicts_update}). If it has fewer conflicts than in $\hat \Omega$ inherited from its parent, then this $\text{FOCAL}_k$ is prioritized. Otherwise, it is penalized (lines \ref{line:dts_reward} and \ref{line:dts_penalize}). Internally, DTS keeps a beta distribution $\text{Beta}(\alpha_k, \beta_k)$ parameterized by $\alpha_k, \beta_k \in \mathbb{R}^{+}$. Upon each penalty or reward, DTS increments $\alpha$ or $\beta$, skewing the distribution towards $1$ or $0$ respectively. After each update operation, a value $v_k$ is sampled from all $\text{Beta}(\alpha_k, \beta_k)$, and the index $k$ of the newly activated $\text{FOCAL}_k$ is set to $\arg \max_{k} v_k$. DTS dampens the effects of the history on updates via a parameter $C$, keeping the sum $\alpha_k + \beta_k \leq C$ always. In our experiments $C=10$.

\subsubsection{Putting it all together}
We now describe Generalized ECBS with both these components (lazy expansion and multiple FOCAL queues) in a quick example (Fig. \ref{fig:gencbs_illustration}). Imagine working in a 3D manipulation domain and designing $2$ types of constraints $\alpha$ and $\beta$ to resolve conflicts. We again note that we always have access to at least one type of ``complete'' constraint (e.g., regular vertex/edge constraint). Generalized ECBS creates 4 queues; OPEN along with $\text{FOCAL}_\alpha$, $\text{FOCAL}_\beta$, $\text{FOCAL}_c$. 
$\focal{c}$ is sorted by conflict count and each $\text{FOCAL}_{\{\alpha,\beta\}}$ is sorted as described previously. We could initialize our DTS beta distributions with a uniform prior ($\alpha = \beta = 1$) or instead bias them based on experience. In this example, DTS initially favors $\text{FOCAL}_\alpha$.

In the beginning, we generate the root node and collect all collisions. Instead of generating the $6$ successor nodes by calling low-level planners for each constraint, we generate the $6$ nodes without replanning agents and insert them into OPEN and all FOCAL queues (Fig. \ref{fig:gencbs_illustration}(a)). Due to the bias, we sample $\text{FOCAL}_\alpha$ and choose its best node (Fig. \ref{fig:gencbs_illustration}(b)). Here the node contains approximate values, so we call the low-level planners to replan with the $\alpha$ constraint and re-queue the node (Fig. \ref{fig:gencbs_illustration}(c)). We reward the DTS sampling distribution since the number of conflicts decreased. We now repeat and sample $\text{FOCAL}_\alpha$. If the same node is picked (Fig. \ref{fig:gencbs_illustration}(c)), it is fully evaluated which means we generate the lazy successors (Fig. \ref{fig:gencbs_illustration}(d)). We repeat this process iteratively until we reach a conflict-free solution or timeout.
We note that Generalized ECBS is complete and bounded sub-optimal for the same reasons outlined in Section \ref{sec:ac_ecbs}.

\label{alg_app}

\section{Experiments}

To evaluate the performance of Generalized ECBS, we created three test scenarios similar to real-world multi-arm manipulation problems that M-RAMP algorithms would be expected to solve (Fig. \ref{fig:experimental_evaluation}). Our tests feature closely interacting manipulators and complex obstacle landscapes. The results suggest that effectively capitalizing on the benefits of incomplete constraints helps solve M-RAMP problems faster.

\begin{table}[t]
\centering

 \begin{tabular}{|l|c|c|c|c|c|}
        \hline
        8 Robots & \cellcolor[HTML]{C0C0C0}Succ. (\%) & \cellcolor[HTML]{C0C0C0} Runtime (s) & \cellcolor[HTML]{C0C0C0} Cost (rad) \\ \hline
\cellcolor[HTML]{C0C0C0}ECBS & 48\% & 17.7 $\pm$ 19.46 & 38.9 $\pm$ 7.30 \\ \hline
\cellcolor[HTML]{C0C0C0}ECBS-A & 78\% & 13.1 $\pm$ 12.60 & 39.9 $\pm$ 6.64 \\ \hline
\cellcolor[HTML]{C0C0C0}ECBS-P & 78\% & 13.2 $\pm$ 12.57 & 39.9 $\pm$ 6.63 \\ \hline
\cellcolor[HTML]{C0C0C0}ECBS-S & 76\% & 15.8 $\pm$ 15.97 & 40.5 $\pm$ 7.15 \\ \hline
\cellcolor[HTML]{C0C0C0}ECBS-LS & 10\% & 8.6 $\pm$ 10.20 & 30.4 $\pm$ 2.49 \\ \hline
\cellcolor[HTML]{C0C0C0}ECBS-XLS & 4\% & 3.2 $\pm$ 1.78 & 28.5 $\pm$ 1.04 \\ \hline
\end{tabular}

  \caption{An experimental analysis showing the effects of directly replacing the constraints in ECBS with incomplete constraints. Significant improvements in success rate can be attributed to the choice of constraint types. The constraints are Step-Priority (P), Avoidance (A), and sphere with $r=5,15,30\text{cm}$ (S, LS, XLS).}
  \label{fig:experimental_incomplete}
\end{table}

\subsection{Experimental Setup}
% To test the applicability of Generalized (E)CBS to M-RAMP, w
We set up $150$ planning problems, each characterized by start-goal pairs. Single-agent configurations $\conf{i}{\text{start}}$ and $\conf{i}{\text{goal}}$  were randomly chosen from a set of task-relevant poses (e.g., pick poses inside different bins) for each agent and verifying the validity of $\compconf{\text{start}} = \{\conf{1}{\text{start}}, \dots \conf{n}{\text{start}}\} $ and $ \compconf{\text{goal}} = \{\conf{1}{\text{goal}}, \dots \conf{n}{\text{goal}}\}$. This task dependence in the problem construction promotes high levels of interaction between agents, similar to that in realistic multi-arm manipulation tasks.

To shed light on the scalability of our algorithm, we vary the number of robots and the levels of interaction between experiments, as illustrated in Figure \ref{fig:experimental_evaluation}. We include two scenarios resembling real-world planning tasks: pick-and-place with $4$ robots and shelf rearrangement with $8$ robots. Both showcase significant clutter and proximity between agents. We also include a relatively open environment with $6$ robots with a single obstacle between them.

Aiming to verify the ability of Generalized ECBS to handle a large number of incomplete constraints, for each vertex or edge conflict in our evaluation, we generate \textit{avoidance}, \textit{step-priority}, \textit{complete} (vertex/edge), and three types of \textit{sphere constraints} with radii $5$, $15$, and $30$ cm. We set an initial DTS bias towards the smaller sphere constraints.

Each robot in our experiments is a Franka Panda manipulator with 7 DoF. The experiments were conducted on an Intel Core i9-12900H laptop with 32GB RAM (5.2GHz). We implemented all algorithms in \texttt{C++}, used the MoveIt!2 \cite{coleman2014reducing} software for interacting with manipulators, and Flexible Collision Library (FCL) \cite{pan2012fcl} for collision checking. 

\subsection{Evaluated Methods}
We compare Generalized ECBS to ubiquitous methods commonly used in motion planning for robotics manipulators, as well as to popular MAPF algorithms. We choose to include RRT-Connect and PRM as these are arguably the most frequently used algorithms in planning for manipulation. These algorithms plan for all agents jointly, treating the team of robotic manipulators as a single composite agent. We use the OMPL \cite{OMPL2012} implementation.

We additionally include results from MAPF planners. We include PP, CBS-MP, CBS, and ECBS, as well as Generalized CBS and Generalized ECBS. Generalized CBS differs from Generalized ECBS in that it does not assume access to conflict counts in high-level and low-level nodes, and as such does not prioritize nodes based on this information. 
All algorithms employ Weighted-A* as their single-agent planner with a weight\footnote{Our heuristic underestimates the cost to go in radians, and edges are unit cost. The weight scales the heuristic value to match the cost of edge transitions and inflates it.} of $50$ and an $L_2$ joint-angle heuristic function. Adaptive motion primitives are of $\pm10$ degrees in individual joints when the end-effector is closer than $20$ cm to its goal location. Otherwise, $\pm15$ degrees in the lower $4$ joints. Edge transitions are of uniform cost and take a single timestep. The sub-optimality bound of ECBS and Generalized ECBS is set to 1.3, and for all FOCAL lists for Generalized CBS to $1$.
Our implementation of CBS-MP differs
slightly from the original in that, here, agents plan on implicit graphs and not on precomputed roadmaps to compare all search algorithms on the same planning representation.

\begin{table}
    \centering
        \begin{tabular}{|l|c|c|c|c|c|}
        \hline
        8 Robots & \cellcolor[HTML]{C0C0C0}Succ. (\%) & \cellcolor[HTML]{C0C0C0} Runtime (s) & \cellcolor[HTML]{C0C0C0} Cost (rad) \\ \hline
        % \rowcolor[HTML]{EFEFEF} 
        % \cellcolor[HTML]{C0C0C0}ECBS & 40\% & \cellcolor[HTML]{EFEFEF}\ 60\% & 0\% \\ \hline
        \cellcolor[HTML]{C0C0C0}ECBS & 48\% & 17.7 $\pm$ 19.4 & 38.9 $\pm$ 7.3 \\ \hline
        \cellcolor[HTML]{C0C0C0}AC-ECBS & 38\% & 27.4 $\pm$ 17.2 & 38.5 $\pm$ 7.3 \\ \hline
        \cellcolor[HTML]{C0C0C0}AC-ECBS-L & 54\% & 18.1 $\pm$ 17.2 & 40.6 $\pm$ 7.2 \\ \hline
        \cellcolor[HTML]{C0C0C0}Gen-ECBS & \textbf{84\%} & 18.8 $\pm$ 16.3 & 41.2 $\pm$ 6.9 \\ \hline
        \end{tabular}    
    \caption{Naively adding constraints to ECBS (AC-ECBS, Sec. \ref{sec:ac_ecbs}) may hurt performance as more time is spent evaluating child CT nodes. Employing lazy one-step evaluations helps (AC-ECBS-L, Sec. \ref{sec::gen_ecbs}), and prioritizing promising constraint types (Generalized ECBS) improves performance further. The 8-robot setup is described in Section \ref{sec:experiments} and shown in Figure \ref{fig:experimental_evaluation}.}
    \label{fig:naive_gen_ecbs}
\end{table}

% \subsection{Evaluation Metrics and Postprocessing}

We evaluate algorithm scalability and solution quality per scene by reporting mean and standard deviation for planning time and solution cost across problems, along with the success rate for each algorithm. Algorithms are allotted $60$ seconds for planning; exceeding this time is considered a failure. Solution cost is measured by the total joint motion in radians. 
% Additionally, we include metrics for collision checks in our scalability analysis for a subset of algorithms. 
All solutions undergo post-processing with a simple shortcutting algorithm which sequentially shortens each agent's solution path without introducing conflicts by replacing segments with linear interpolations while avoiding obstacles and other agents. This standard shortcutting method is commonly used to refine paths generated by sampling-based planners \cite{choset2005shortcutting}.

\begin{figure*}[t]
%%%%%%%%%%%%%% 8 robots %%%%%%%%%%%%%%%%%%%%%%
\begin{minipage}{0.6\textwidth}
\centering
%\begin{subfigure}
  \includegraphics[height=3cm]{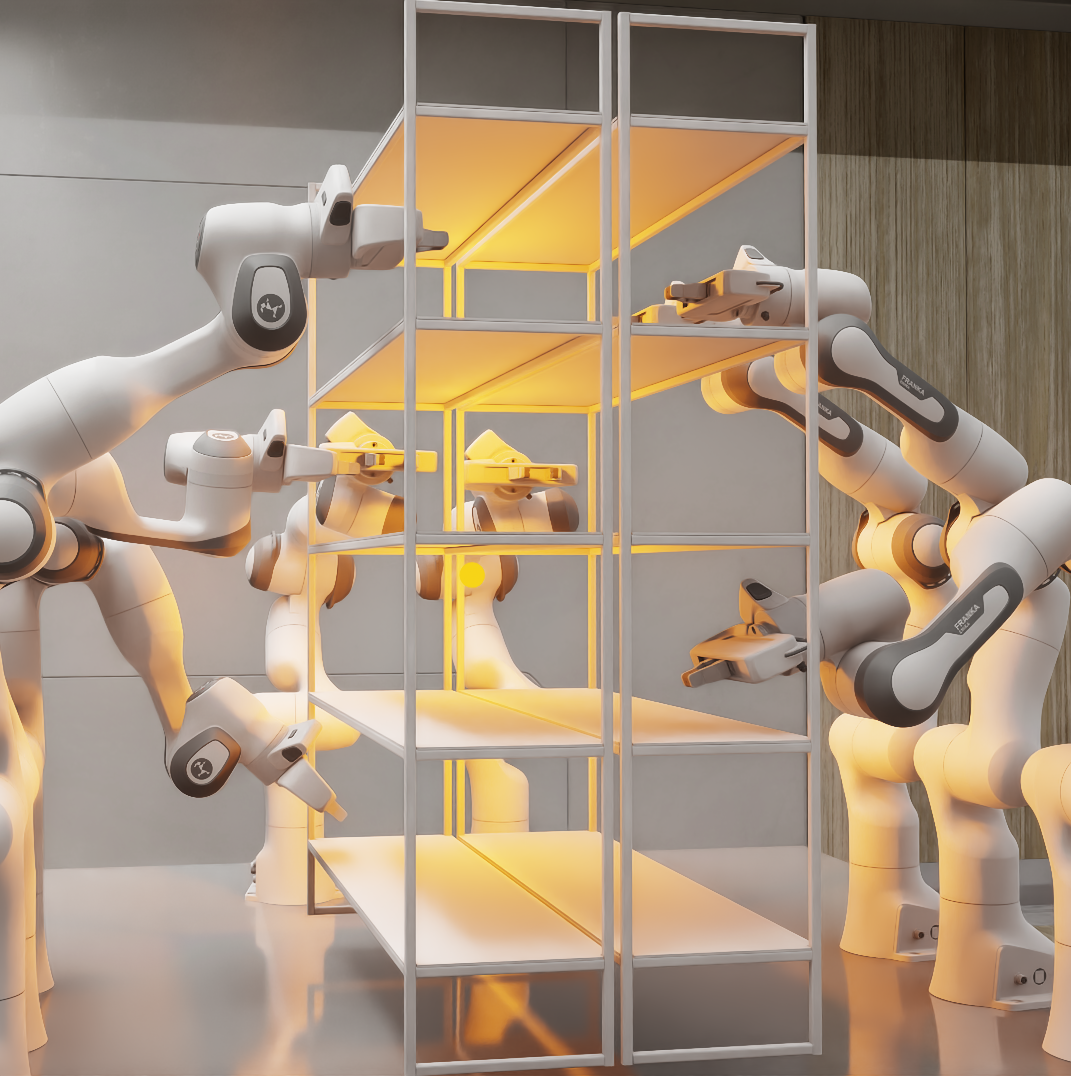}
  \hspace{0.3cm}
  \includegraphics[height=3.05cm]{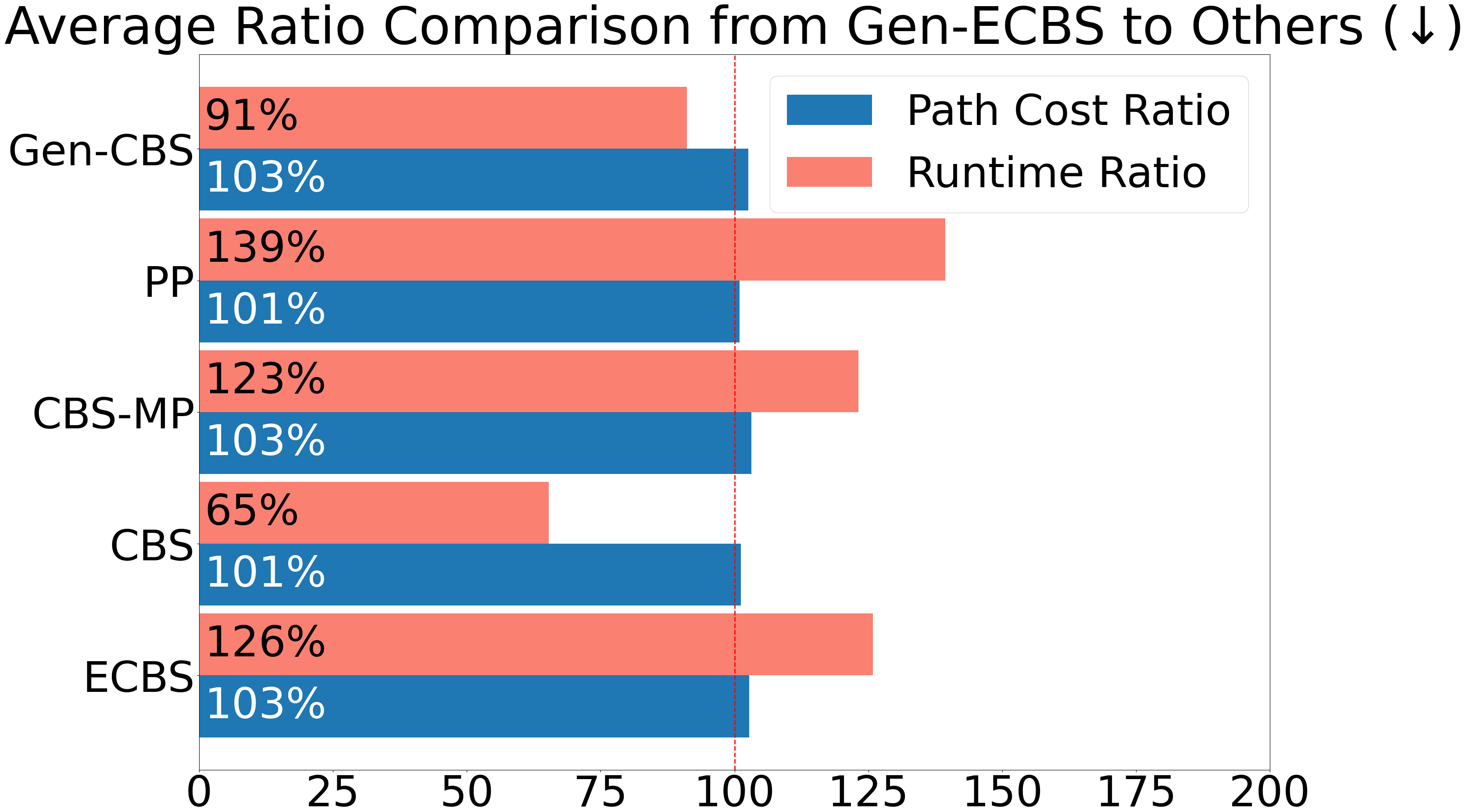}
%   \caption{}
%\end{subfigure}
% Add a table.
% \caption{Images in the first row}
\end{minipage}
\begin{minipage}{0.4\textwidth}
\resizebox{\textwidth}{!}{%
\hspace{-0.5cm}
\begin{tabular}{|l|c|c|c|c|c|}
\hline
8 Robots & \cellcolor[HTML]{C0C0C0}Success (\%) & \cellcolor[HTML]{C0C0C0} Runtime (sec) & \cellcolor[HTML]{C0C0C0} Cost (rad) \\ \hline
\rowcolor[HTML]{EFEFEF} 
\cellcolor[HTML]{C0C0C0}Gen-ECBS & \textbf{84\%} & 18.8 $\pm$ 16.39 & 41.2 $\pm$ 6.93 \\ \hline
\cellcolor[HTML]{C0C0C0}Gen-CBS & 34\% & 20.9 $\pm$ 17.38 & 36.9 $\pm$ 6.15 \\ \hline
\cellcolor[HTML]{C0C0C0}ECBS & 48\% & 17.7 $\pm$ 19.46 & 38.9 $\pm$ 7.30 \\ \hline
\cellcolor[HTML]{C0C0C0}CBS & 18\% & 23.3 $\pm$ 23.03 & 35.0 $\pm$ 6.08 \\ \hline
\cellcolor[HTML]{C0C0C0}PP & 52\% & 14.5 $\pm$ 9.64 & 39.0 $\pm$ 6.48 \\ \hline
\cellcolor[HTML]{C0C0C0}RRT-Con & 0\% & - & - \\ \hline
\cellcolor[HTML]{C0C0C0}PRM & 0\% & - & - \\ \hline
\cellcolor[HTML]{C0C0C0}CBS-MP & 36\% & 15.2 $\pm$ 15.19 & 36.9 $\pm$ 5.93 \\ \hline
\end{tabular}
}
\end{minipage}
%%%%%%%%%%%%% 6 robots %%%%%%%%%%%%%%%%%%%%%%
\hspace{0.2cm}
\begin{minipage}{0.6\textwidth}
\centering
    %\begin{subfigure}
      \includegraphics[height=3cm]{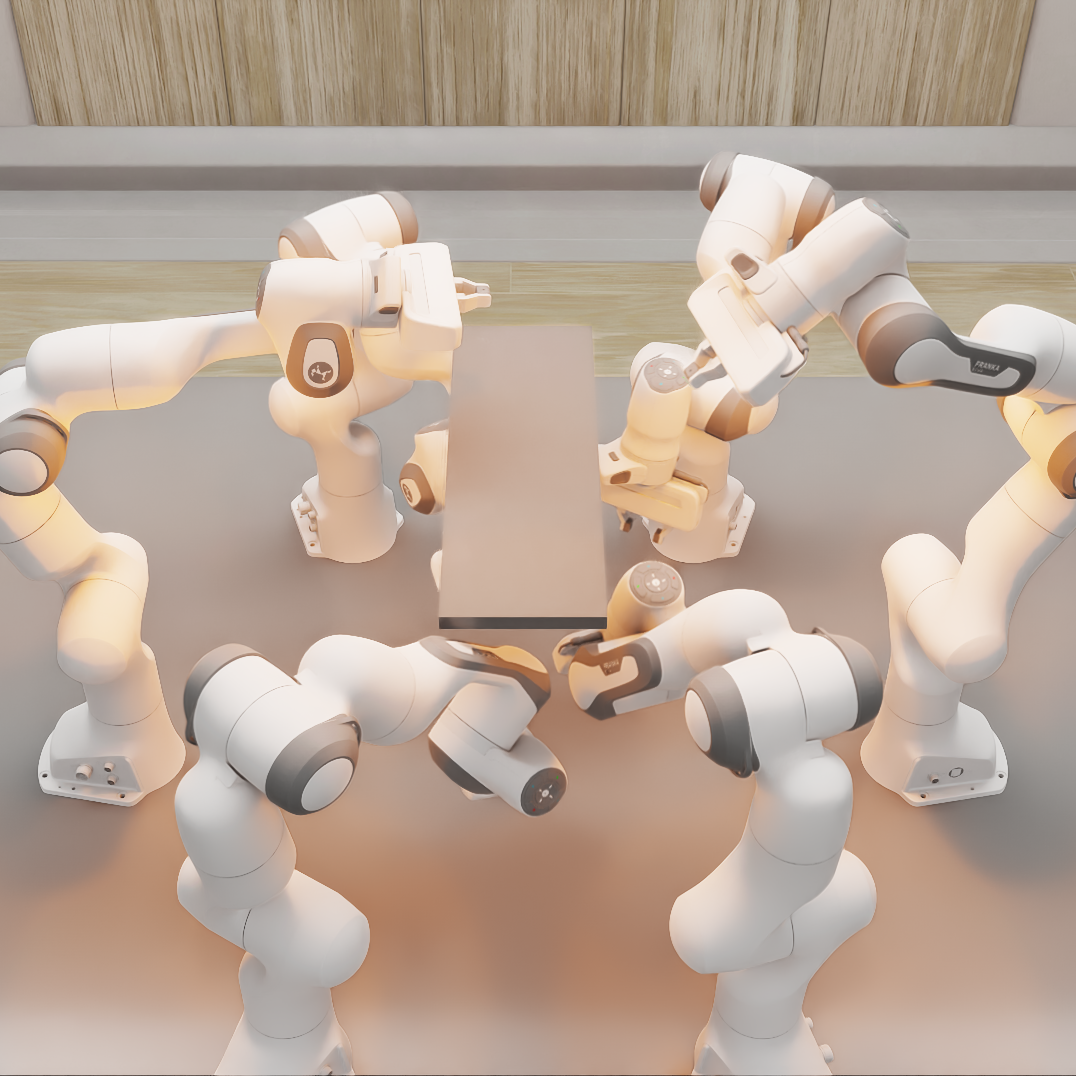}
        \hspace{0.3cm}
      \includegraphics[height=3.05cm]{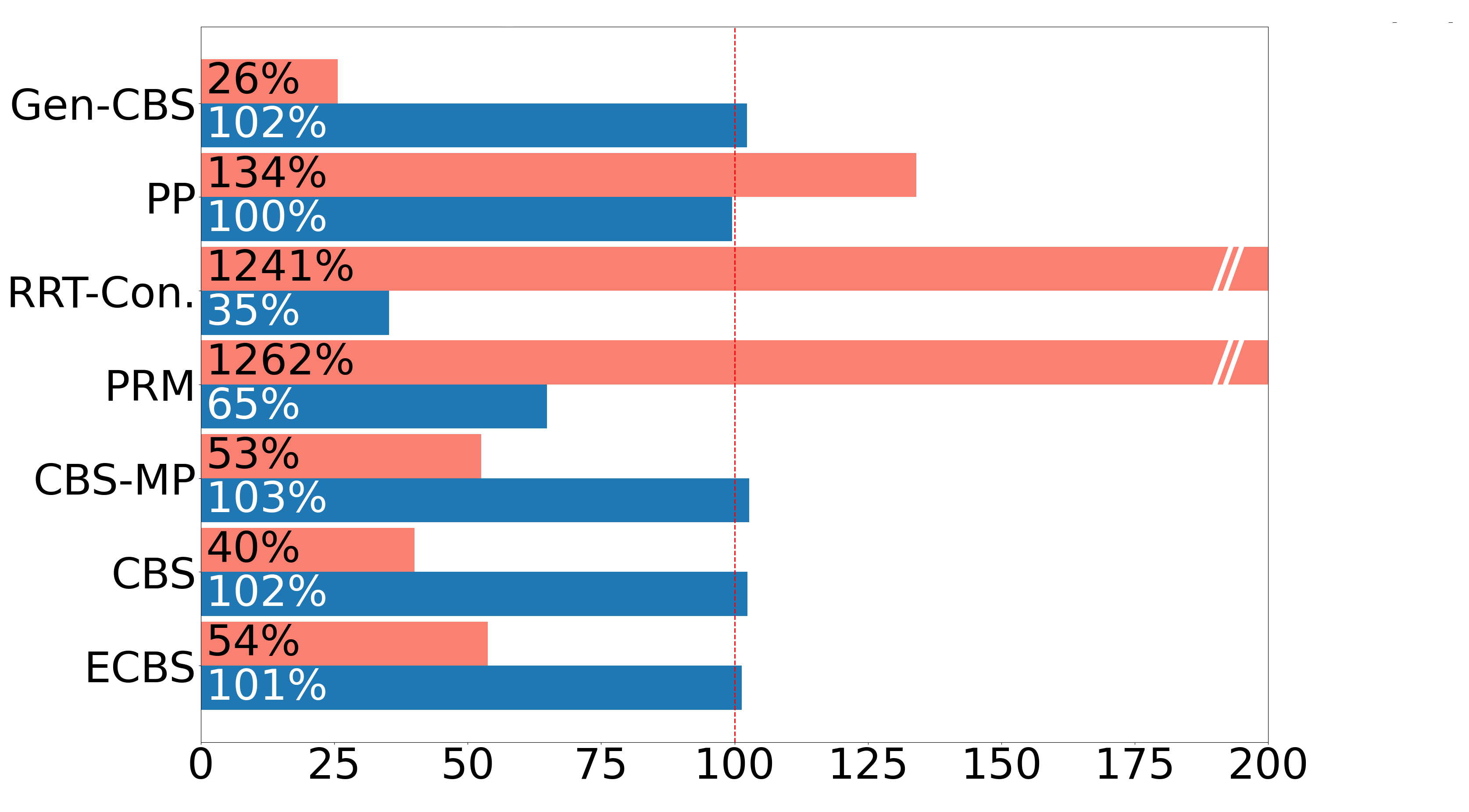}
    %   \caption{}
   % \end{subfigure}
% Add a table.
% \caption{Images in the first row}
    \end{minipage}
\begin{minipage}{0.4\textwidth}
\resizebox{\textwidth}{!}{%
\hspace{-0.5cm}
\begin{tabular}{|l|c|c|c|c|c|}
\hline
6 Robots & \cellcolor[HTML]{C0C0C0}Success (\%) & \cellcolor[HTML]{C0C0C0} Runtime (sec) & \cellcolor[HTML]{C0C0C0} Cost (rad) \\ \hline
\rowcolor[HTML]{EFEFEF} 
\cellcolor[HTML]{C0C0C0}Gen-ECBS & \textbf{84\%} & 7.4 $\pm$ 11.43 & 31.8 $\pm$ 5.01 \\ \hline
\cellcolor[HTML]{C0C0C0}Gen-CBS & 42\% & 15.3 $\pm$ 15.19 & 29.4 $\pm$ 4.77 \\ \hline
\cellcolor[HTML]{C0C0C0}ECBS & 70\% & 11.6 $\pm$ 10.26 & 31.2 $\pm$ 5.26 \\ \hline
\cellcolor[HTML]{C0C0C0}CBS & 22\% & 14.1 $\pm$ 18.59 & 29.9 $\pm$ 5.12 \\ \hline
\cellcolor[HTML]{C0C0C0}PP & 72\% & 12.7 $\pm$ 14.41 & 31.5 $\pm$ 4.68 \\ \hline
\cellcolor[HTML]{C0C0C0}RRT-Con. & \textbf{84}\% & 1.4 $\pm$ 1.71 & 103.9 $\pm$ 35 \\ \hline
\cellcolor[HTML]{C0C0C0}PRM & 18\% & 17.2 $\pm$ 20.61 & 131.6 $\pm$ 108 \\ \hline
\cellcolor[HTML]{C0C0C0}CBS-MP & 54\% & 15.9 $\pm$ 18.44 & 30.3 $\pm$ 5.16 \\ \hline
\end{tabular}
}
\end{minipage}
%%%%%%%%%%%%%% 4 robots %%%%%%%%%%%%%%%%%%%%%%
\hspace{0.2cm}
\begin{minipage}{0.6\textwidth}
\centering
    %\begin{subfigure}{}
      \includegraphics[height=3cm]{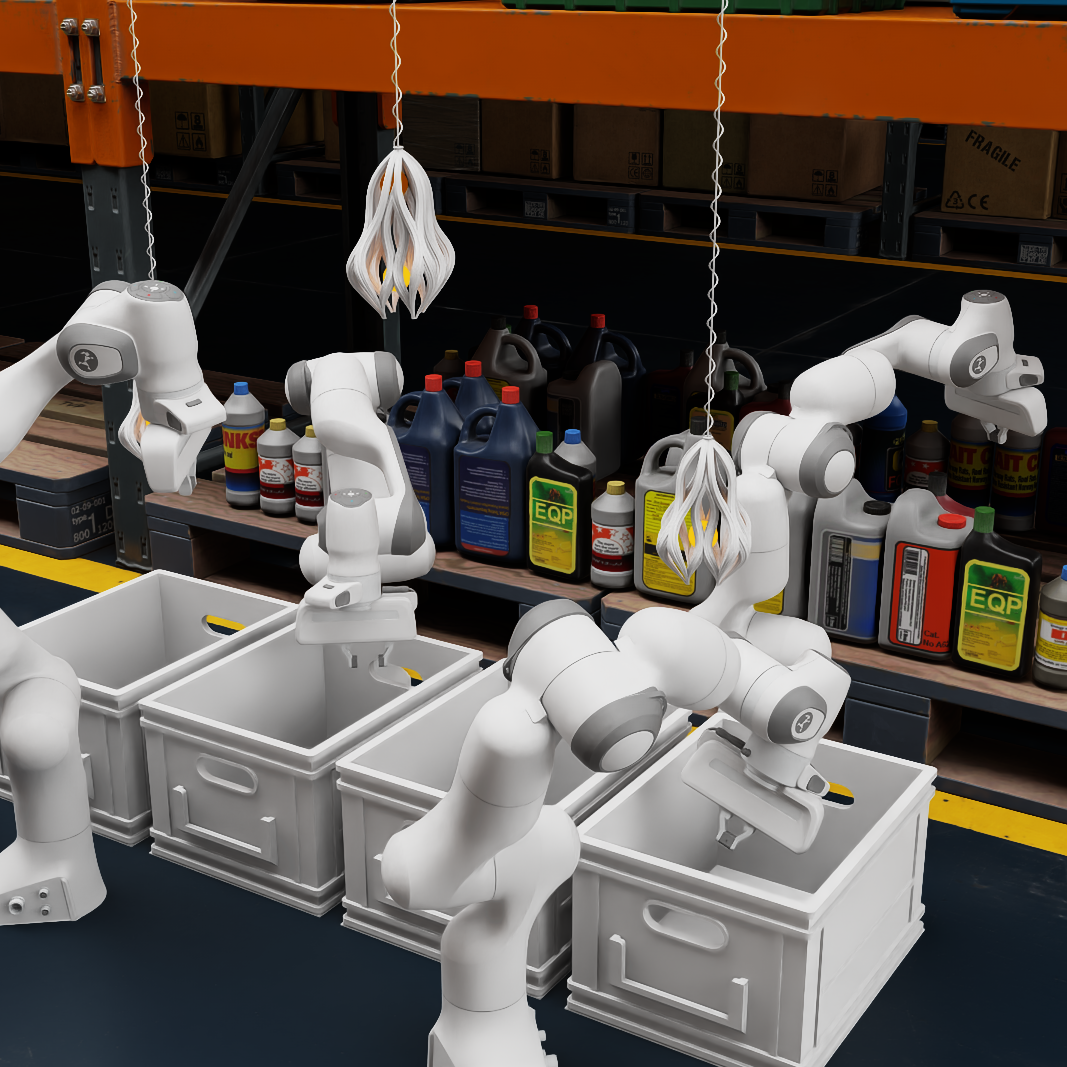}
        \hspace{0.3cm}
      \includegraphics[height=3.05cm]{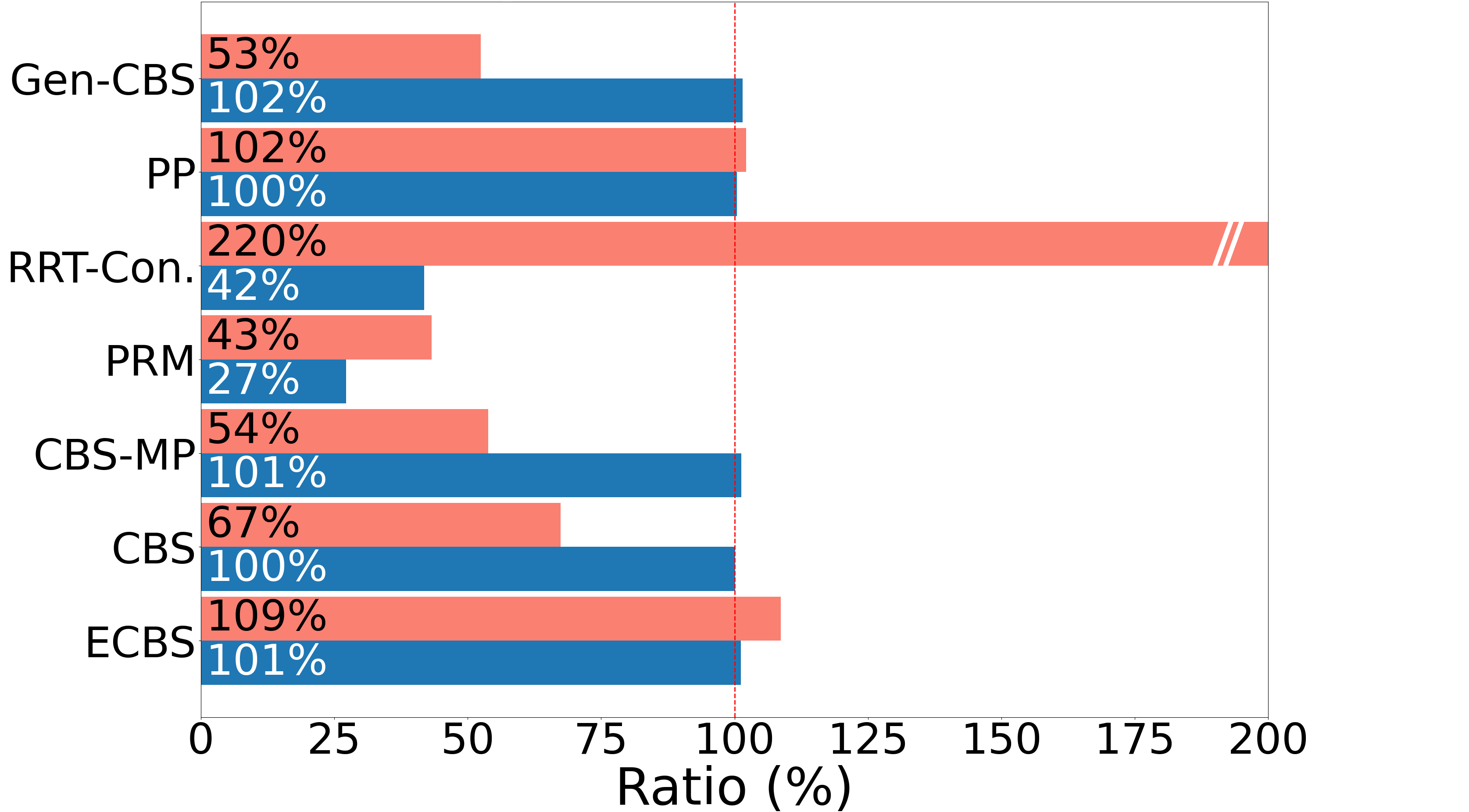}
    %   \caption{}
    %\end{subfigure}
% Add a table.
% \caption{Images in the first row}
\end{minipage}
\begin{minipage}{0.4\textwidth}
\resizebox{\textwidth}{!}{%
\hspace{-0.5cm}
\begin{tabular}{|l|c|c|c|c|c|}
\hline
4 Robots & \cellcolor[HTML]{C0C0C0}Success (\%) & \cellcolor[HTML]{C0C0C0} Runtime (sec) & \cellcolor[HTML]{C0C0C0} Cost (rad) \\ \hline
\rowcolor[HTML]{EFEFEF} 
\cellcolor[HTML]{C0C0C0}Gen-ECBS & \textbf{98\%} & 9.5 $\pm$ 11.10 & 23.2 $\pm$ 3.66 \\ \hline
\cellcolor[HTML]{C0C0C0}Gen-CBS & 60\% & 15.3 $\pm$ 15.57 & 21.4 $\pm$ 2.91 \\ \hline
\cellcolor[HTML]{C0C0C0}ECBS & 86\% & 12.5 $\pm$ 15.08 & 22.6 $\pm$ 3.61 \\ \hline
\cellcolor[HTML]{C0C0C0}CBS & 28\% & 9.1 $\pm$ 9.61 & 20.4 $\pm$ 2.89 \\ \hline
\cellcolor[HTML]{C0C0C0}PP & 88\% & 11.4 $\pm$ 13.98 & 22.9 $\pm$ 3.51 \\ \hline
\cellcolor[HTML]{C0C0C0}RRT-Con. & 32\% & 10.9 $\pm$ 10.00 & 60.5 $\pm$ 19.55 \\ \hline
\cellcolor[HTML]{C0C0C0}PRM & 2\% & 13.3 $\pm$ 0.0 & 78.9 $\pm$ 0.0 \\ \hline
\cellcolor[HTML]{C0C0C0}CBS-MP & 68\% & 13.9 $\pm$ 14.06 & 21.7 $\pm$ 2.99 \\ \hline
\end{tabular}
}
\end{minipage}
  \caption{Comparing planning algorithms in realistic M-RAMP problems. {Left:} Renders of the planning scenes. Two scenes, with 4 and 8 robots, exhibit dense obstacle clutter while the scene with 6 robots is more open. {Right:} success rate, planning time, and cost results across 50 tests ($\mu \pm \sigma$). Through a high success rate, we see that Generalized ECBS scales with the number of robots and handles clutter. {Middle:} a pairwise comparison between Generalized ECBS and the other methods. Looking at tests where both methods succeeded, we report the average ratio of Generalized ECBS's path cost and planning time to that of the other. The path cost is nearly identical among search-based planners. Since Generalized ECBS solves more problems than the others, the pairwise comparison focuses on simpler tests, where Generalized ECBS planning time may show a slight overhead.}
  \label{fig:experimental_evaluation}
\end{figure*}

\subsection{Experimental Results}
Table \ref{fig:experimental_incomplete} highlights the mercurial behavior of incomplete constraints when substituting them in ECBS in place of complete constraints. When effective, incomplete constraints like avoidance, step-priority, or sphere can improve success rates, showcasing how these constraints can effectively prune the search space. However, it may not be clear in some domains if constraints are overly pruning. We see that increasing the sphere constraint radius from 5cm (ECBS-S) to 15cm (ECBS-LS) dramatically reduces the success rate in our 8-arm scene as the search space becomes overly constrained. Thus constraints must be used intelligently to provide consistent benefits or risks needing to be re-evaluated for utility in every scenario.

This motivates trying out all the constraints at once with AC-ECBS as described in Section \ref{sec:ac_ecbs}. AC-ECBS in Table \ref{fig:naive_gen_ecbs} demonstrates how naively incorporating all these constraints within the ECBS framework leads to a larger runtime and a lower success rate due to the large branching factor. Using lazy expansions (AC-ECBS-L) improves performance compared to AC-EBCS but marginally against regular ECBS. 
By employing multiple queues and adaptively prioritizing constraint types, Generalized ECBS can effectively capitalize on the benefits of incomplete constraints and achieve a significant improvement in planning time and success rate.

We compare Generalized ECBS with other standard approaches to M-RAMP in Figure \ref{fig:experimental_evaluation} and see that it has a higher success rate than all other baselines.
For 4 and 6 robots, sampling-based methods (PRM, RRT-Connect) impressively succeed in finding solutions to some problems, however, suffer from poor path cost and overall success rate.
% the M-RAMP instances are not too tough, and 
With 8 robots, these methods fail, showing how MAPF reasoning (rather than composite space planning) is preferred.
In this challenging scenario, it is particularly interesting to compare Generalized ECBS with Prioritized Planning (PP) which has severe incomplete constraints, and with ECBS which has only complete constraints. We see that Gen-ECBS has a higher success rate than PP, implying that the priority constraints are too strict. However, we see that just complete constraints, ECBS, also performs worse than Generalized ECBS. This highlights how using multiple incomplete and complete constraints is better than either by themselves.

\label{sec:experiments}

\section{Conclusion}
Existing MAPF works are ubiquitous in 2D grid worlds, but may struggle in more realistic domains like Multi-Robot-Arm Motion Planning (M-RAMP). A key factor in their success is the type of constraints they use.
We observed that in current frameworks, methods either apply complete constraints and are theoretically complete and bounded sub-optimal but slow, or apply stronger constraints and are fast but incomplete. Practitioners in today's paradigm are thus forced to pick between slow methods with guarantees or fast methods that can fail on solvable instances. 

We bridge this gap with Generalized-ECBS, which enables using arbitrary incomplete constraints without giving up completeness or bounded sub-optimality. Gen-ECBS uses lazy expansions, multiple focal queues, and dynamic prioritization to effectively use multiple constraints for a higher and more consistent success rate.
In our experiments, we observed that Generalized-ECBS is effective in high-dimensional M-RAMP with a few simple constraints. Our proposed approach is domain-agnostic, and 
we are excited about the possibilities that it opens to future research in MAPF and multi-arm manipulation.

\section*{Acknowledgements}
The work was supported by the National Science Foundation under Grant IIS-2328671 and the CMU Manufacturing Futures Institute, made possible by the Richard King Mellon Foundation.
\bibliography{aaai24}

\end{document}